\newtheorem{definition}{Definition}[section]
\newtheorem{Th}{Theorem}[section]
\newtheorem{proof}{Proof}[section]
\newtheorem{denote}{Denotation}
\journal{Fuzzy Sets and Systems}
\begin{document}

\begin{frontmatter}



\title{Multi-Valued Cognitive Maps: Calculations with Linguistic Variables without Using Numbers}


\author{Dmitry Maximov}

\address{Trapeznikov Institute of Control Science Russian Academy of Sciences,
65 Profsoyuznaya str, Moscow}
\ead{jhanjaa@ipu.ru; dmmax@inbox.ru}

\begin{abstract}
A concept of multi-valued cognitive maps is introduced in this paper. The concept expands the fuzzy one. However, all variables and weights are not linearly ordered in the concept, but are only partially-ordered. Such an approach allows us to operate in cognitive maps with partially-ordered linguistic variables directly, without vague fuzzification/defuzzification methods. Hence, we may consider more subtle differences in degrees of experts' uncertainty, than in the fuzzy case. We prove the convergence of such cognitive maps and give a simple computational example which demonstrates using such a partially-ordered uncertainty degree scale.
\end{abstract}



\begin{keyword}
multi-valued neural networks \sep multi-valued cognitive maps \sep fuzzy cognitive maps
\sep linguistic variable lattice

\MSC[2010] 68Q85 \sep 68T37

\end{keyword}

\end{frontmatter}


\section{Introduction}
\label{intro}
The fuzzy cognitive map (FCM) concept was introduced by B. Kosko in \cite{kosko}. FCMs are considered as feedback models of causality, in which fuzzy values are assigned to concepts and causal relationships amongst them. An increase in the value of a concept implies a corresponding positive or negative increase in values of other concepts connected to it, according to the relationships. The concepts are also called nodes, and the relationships are called weights. Thus, we obtain a network similar to a neural network in which all the variables and weights take values in the interval $[0, 1]$.

Fuzzy cognitive maps have been studied and used in various fields of engineering and hard sciences \cite{glykas}. Their role is especially important in investigations of the behavior of complex dynamic systems \cite{hagi}, \cite{koskod}, \cite{gould}, \cite{craiger}. This is due to the fact that human knowledge uncertainty affects the systems definition and processing \cite{faulin}. However, fuzzy modelling of uncertainty is rather poor: the theory operates with only linearly-ordered experts' valuations, which in reality can be unordered: e.g., ``yes and no'' and ``neither yes nor no''.

The main contribution and the novelty of this paper is that we use a lattice (i.e., a partially-ordered set) in cognitive maps as the scale of experts' valuations (i.e., weights) and as the set of variables (i.e., concept values), instead of a linearly-ordered set. Thus, we may consider more subtle differences in degrees of experts' uncertainty, than in the fuzzy case. The approach continues the line of investigations in which a system state is estimated, not by numbers, but by various objects (sets, graphs, images, etc.) making up different lattices: \cite{Maximov_17}, \cite{Maximov_R}, \cite{Maximov_an}, \cite{maximov19}.  Also, exactly this concept was used in the research dedicated to the related area of multi-valued neural networks: \cite{maximov20e}, \cite{Maximov_neuro}, \cite{Maximov_neuro2}. Similar to these papers, we call such cognitive maps multi-valued ones (MVCM's).

Thus, all the variables and weights are partially-ordered linguistic ones here, and we do not use numbers in the cognitive maps' calculations\footnote{Let us note, we do not consider the problems of comparing different expert opinions and calculating their mean: we need a theory of multi-valued numbers (i.e., the lattice subsets), as an analog of fuzzy ones, to do this. The decision of the problem is left for the future.}. Nevertheless, such maps converge, and we consider the conditions of convergence (Sec. \ref{converge}). In Sec. \ref{learn} we represent a learning algorithm for weights, applicable when we know the desired range for output values. In Sec. \ref{model} we consider a simple computational model of a hybrid energy system. In Sec, \ref{disc} we discuss our experimental results and compare them with the previous ones. We give the necessary definitions used in the text in Sec. \ref{back}, and we conclude the paper in Sec. \ref{concl}.

\section{Backgrounds}
\label{back}

\subsection{Lattices \cite{birk}}
\begin{definition}
A \textbf{lattice} is a partially-ordered set having, for any two elements, their exact upper bound or join $\vee$ (sup, max) and the exact lower bound or meet $\wedge$ (inf, min).
\end{definition}
\begin{definition}
The \textbf{exact upper bound} of a subset $X\subseteq P$ of a partially-ordered set $P$ is the smallest $P$-element $a$, larger than all the elements of $X$: $min(a)\in P :\;a\geq x,\;\forall x\in P$.
\end{definition}
\begin{definition}
The \textbf{exact lower bound} is dually defined as the largest $P$-element, smaller than all the elements of $X$.
\end{definition}
\begin{definition}
A \textbf{complete lattice} is a lattice in which any two subsets have a join and a meet.
This means that in a non-empty complete lattice there is the largest ``$\top$'' and the smallest ``0'' elements.
\end{definition}
If we take such a lattice as a scale of truth values in a multi-valued logic,
then the largest element will correspond to complete truth (true), the smallest to complete falsehood (false), and intermediate elements will correspond to partial truth in the same way as  the elements of the segment [0,1] evaluate partial truth in fuzzy logic.

In logics, with such a scale of truth values, implication can be determined by multiplying lattice elements, or internally, only from lattice operations.
\begin{definition}
Lattice elements, from which all the others are obtained
by join and meet operations are called \textbf{generators} of the lattice.
\end{definition}
\begin{definition}
A lattice is called \textbf{atomic} if every two of its generators have null meets.
\end{definition}
\begin{definition}
A \textbf{Brouwer lattice} is a lattice that has internal implications.
\end{definition}
\begin{definition}
In such a lattice, the \textbf{implication} $c = a\Rightarrow b$ is defined as the largest $c:\;a\wedge b = a\wedge c$.
\end{definition}
Distribution laws for join and meet are satisfied in Brouwer lattices. The converse is true only for finite lattices.

\subsection{Residuated Lattices \cite{resid}}
In non-distributive lattices, the implication cannot be defined. However, we may introduce a multiplication of the lattice elements and use it to define an external implication.
\begin{definition}
A \textbf{residuated lattice} is an algebra $(L, \vee, \wedge, \cdot, 1, \rightarrow, \leftarrow)$ satisfying the following conditions:
\begin{itemize}
\item $(L, \vee, \wedge)$ is a lattice;\\
\item $(L, \cdot, 1)$ is a monoid; \\
\item $(\rightarrow, \leftarrow)$ is a pair of residuals of the operation $\cdot$, that means
\begin{equation*}
\forall x, y \in L: x\cdot y \leqslant z \Leftrightarrow y\leqslant x\rightarrow z \Leftrightarrow x\leqslant z\leftarrow y
\end{equation*}
\end{itemize}
In this case, the operation $\cdot$ is order preserving in each argument and for all $a, b \in L$ both the sets $\{y\in L| a\cdot y \leqslant b\}$ and $\{x\in L| x\cdot a \leqslant b\}$ each contains a greatest element ($a\rightarrow b$ and $b\leftarrow a$ respectively).
\end{definition}
The monoid multiplication $\cdot$ is distributive over $\vee$:
\begin{equation*}
x\cdot (y\vee z) = (x\cdot y)\vee(x\cdot z).
\end{equation*}
Also, $x\cdot 0 = 0\cdot x = 0$. A special case of residuated lattices is a Heyting algebra, when the monoid multiplication coincides with $\wedge$.

In non-commutative monoids, residuals $\rightarrow$ and $\leftarrow$ can be understood as having a temporal quality: $x\cdot y \leqslant z$ means ``$x \;then\; y \;entails\; z$,'' $y \leqslant x\rightarrow z$ means ``$y \;estimates\; the\; transition \;had\; x\; then\; z$,'' and $x \leqslant z\leftarrow y$ means ``$x\; estimates\; the\; opportunity$ \emph{if-ever} $y\; then\; z$.'' You may think about $x, y$, and $z$ as $bet,\; win$, and $rich$ correspondingly (Wikipedia).

\begin{definition}
A residuated lattice A is said to be \textbf{integrally closed} if it satisfies the equations
$x\cdot y \leqslant x \Longrightarrow y \leqslant 1$ and $y\cdot x \leqslant x \Longrightarrow y \leqslant 1$, or equivalently, the equations $x\rightarrow x = 1$ and $x\leftarrow x = 1$ \\ \cite{integr}.
\end{definition}
Any upper or lower bounded integrally closed residuated lattice $L$ is \textbf{integral}, i.e., $a\leqslant 1,\;\forall a\in L$ \cite{integr}.

\section{Cognitive Map Convergence}
\label{converge}
A graphical representation of an example of a (multi-valued) cognitive map (MVCM) is depicted in Fig. \ref{fig1}.
\begin{figure}
  \includegraphics[scale=0.8]{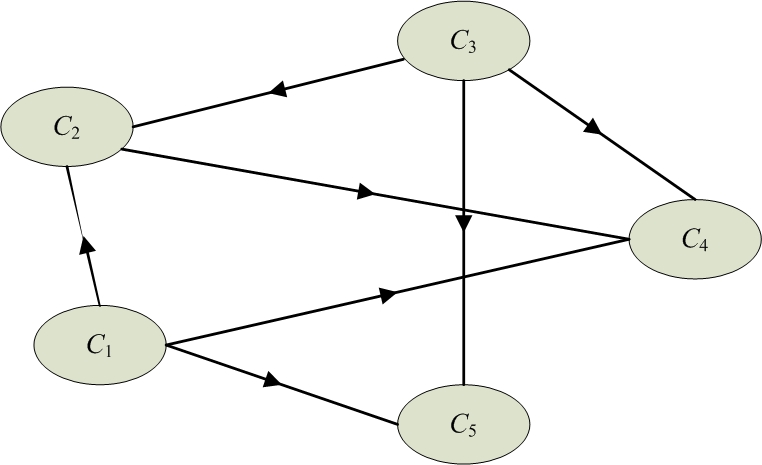}
\caption{An example of a cognitive map}
\label{fig1}       
\end{figure}
The expert knowledge on the behaviour of the system is stored in the structure
of such a graphical representation of the map. Each concept $C_{i}$ represents a characteristic of the system under consideration. It may represent goals, events, actions, states, etc. of the system. Each $C_{i}$ is characterized by an element $A_{i}$ of a lattice $L$, which represents the value of the concept, and it is obtained from an expert opinion about the real value of the systems' variable representing this concept.  Causality between concepts allows degrees of causality, which also belong to the lattice $L$; thus, the weights $w_{ij}$'s of the connections are the lattice elements and represent the expert uncertainty degrees of the concepts' mutual influences. The value of $w_{ij}$ indicates how strongly concept $C_{i}$ influences concept $C_{j}$. A simple example of such a lattice of experts' opinions and uncertainty degrees is depicted in Fig. \ref{fig2}.

\begin{figure}
  \includegraphics[scale=0.8]{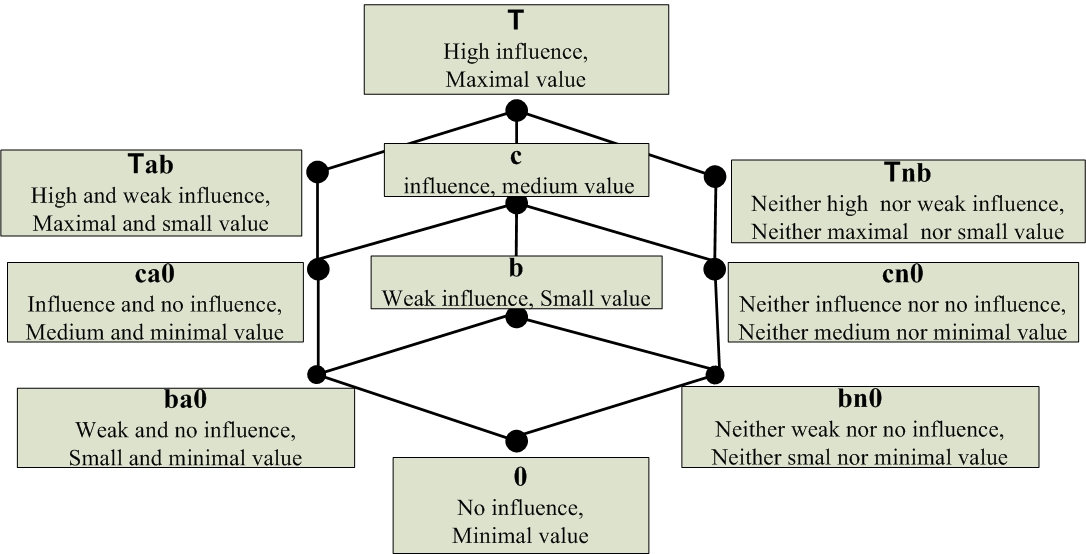}
\caption{An example of an uncertainty degree lattice}
\label{fig2}       
\end{figure}

The equation that calculates the values of concepts of FCM's with $n$ nodes, can be written in its general form as:
\begin{equation}\label{eq1}
A_{i}^{k} = f(\sum_{j=1,j\neq i}^{n}w_{ji}A_{j}^{k-1} + d_{ii}A_{i}^{k-1}).
\end{equation}
Here $A_{i}^{k}$ is the value of the concept $C_{i}$ at discrete time $k$, and $d_{ii}$ is a value of self-feedback to node $i$. All values belong to the interval $[0, 1]$, and the function $f$ normalizes its argument up to this interval. Existence and uniqueness of solutions of (\ref{eq1}) in FCM's are proved in \cite{kottas} for some such trimming functions $f$'s.

We use the equation (\ref{eq1}) for MVCM's in the following form:
\begin{equation}\label{eq2}
A_{i}^{k} =c_{i}^{k-1}\cdot f_{i}^{k-1}\cdot\bigvee_{j=1}^{n}w_{ji}\cdot A_{j}^{k-1}
\end{equation}
where all quantities take values in a residuated \emph{atomic} lattice $L$, and we use the monoid multiplication and  the lattice join, instead of the sum and numeric multiplication. We restrict ourselves to atomic lattices, since we prove the maps' convergence only in this case. Thus, the example of the simple possible lattice in Fig. \ref{fig2} is out of our consideration, and we consider a more complicated variant in our modelling example.

We do not need the quantities $f_{i}^{k}$'s and $c_{i}^{k}$'s in (\ref{eq2}) to norm the joins, since all joins are inside the lattice. Thus, we use picking up of these quantities' values in order to provide the map convergence. Although $f_{i}$'s and $c_{i}$'s do not normalize \textbf{wA}, they play the role of adjusting function: for each \textbf{wA} we get corresponding $f_{i}$ and $c_{i}$ values.

\begin{Th}\label{th}
Multi-Valued Cognitive Maps determined by equation (\ref{eq2}) where concepts and weights take values in a finite atomic residuated and integrally-closed lattice $L$ (hence, $L$ is integral), converge under a suitable choice of $c_{i}^{k}$'s and $f_{i}^{k}$'s.
\end{Th}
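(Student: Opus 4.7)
The plan is to combine three structural features of $L$---finiteness, integrality, and the existence of right residuals---to force the orbit $\bigl(A_{1}^{k},\ldots,A_{n}^{k}\bigr)_{k\ge 0}$ in $L^{n}$ to be componentwise non-increasing. Once this is achieved, convergence is automatic: the finite product-ordered poset $L^{n}$ has no infinite strictly descending chain, so any componentwise non-increasing sequence stabilizes in at most $|L|^{n}$ steps, and its stable state is, by construction, a fixed point of (\ref{eq2}) for the chosen $c$ and $f$ parameters.

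To implement this, I would abbreviate the ``free'' part of the update as $B_{i}^{k-1}:=\bigvee_{j=1}^{n}w_{ji}\cdot A_{j}^{k-1}\in L$, so that (\ref{eq2}) reads $A_{i}^{k}=c_{i}^{k-1}\cdot f_{i}^{k-1}\cdot B_{i}^{k-1}$. The adjusting factors are then picked at each step to force $A_{i}^{k}\le A_{i}^{k-1}$. By the residuation identity $x\cdot y\le z\Leftrightarrow x\le z\leftarrow y$, this is equivalent to $c_{i}^{k-1}\cdot f_{i}^{k-1}\le A_{i}^{k-1}\leftarrow B_{i}^{k-1}$, and since $L$ is residuated the right residual always exists. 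A canonical choice that preserves as much of the signal as possible is $c_{i}^{k-1}:=1$ (top of $L$, using integrality) and $f_{i}^{k-1}:=A_{i}^{k-1}\leftarrow B_{i}^{k-1}$; any smaller product $c\cdot f$ also satisfies the bound.

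Putting these pieces together, the orbit lives in the finite poset $L^{n}$, is componentwise non-increasing, and therefore stabilizes at some $A^{\ast}=(A_{1}^{\ast},\ldots,A_{n}^{\ast})$ after finitely many steps $K$; retaining the factors $c_{i}:=c_{i}^{K}$ and $f_{i}:=f_{i}^{K}$ from that point on gives $A_{i}^{\ast}=c_{i}\cdot f_{i}\cdot\bigvee_{j}w_{ji}\cdot A_{j}^{\ast}$, so $A^{\ast}$ is indeed a fixed point of the recurrence.

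The main obstacle I anticipate is not the existence argument---finiteness, integrality and residuation give it almost for free---but making the ``suitable'' choice informative. Setting $c_{i}^{k-1}=0$ trivially collapses the map to the zero vector in one step, which is useless; the substantive content is that the maximal choice above is well-defined and yields a non-trivial limit. Here the atomic hypothesis is expected to do real work: each $A_{j}^{k-1}$ is a join of atoms below it, and since $\cdot$ distributes over $\vee$ one has $B_{i}^{k-1}=\bigvee_{j}\bigvee_{a\le A_{j}^{k-1}}w_{ji}\cdot a$, so the residual $A_{i}^{k-1}\leftarrow B_{i}^{k-1}$ and hence the update can be computed and tracked atom-by-atom. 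I expect this atom-level bookkeeping to be the trickiest part of the argument and the place where the atomic restriction stated in the theorem actually earns its keep.
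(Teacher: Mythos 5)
Your argument is correct for the theorem as literally stated, but it proves it by a genuinely different and more economical route than the paper. You force the orbit to be componentwise non-increasing by taking $c_{i}^{k-1}=1$ and $f_{i}^{k-1}=A_{i}^{k-1}\leftarrow B_{i}^{k-1}$, so that $A_{i}^{k}\leqslant A_{i}^{k-1}$, and then invoke finiteness of $L^{n}$; at stabilization the defining recurrence forces equality, so the limit is a fixed point. This needs only finiteness, the monoid unit and residuation --- notice that your proof never actually uses atomicity or integral closedness, despite your closing speculation that atom-level bookkeeping would be where those hypotheses ``earn their keep.'' The paper instead enforces the weaker invariant $A_{i}^{k}\leqslant A_{i}^{k-1}\vee A_{i}^{k-2}$ together with $A_{i}^{k}\geqslant A_{i}^{k-1}\wedge A_{i}^{k-2}$, i.e.\ it lets the iterates oscillate inside a band and shows that the \emph{symmetric difference of generator sets} $A_{i}^{k+1}-A_{i}^{k}$ strictly shrinks; that is exactly where atomicity (to identify lattice elements with their generator sets) and integral closedness (to compare the residuals $f_{i}^{k}$ and $r_{i}^{k}$ and conclude $c_{i}^{k}=1$ suffices) do real work. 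The trade-off: your choice is simpler and fully rigorous, but it is a more aggressive damping whose limit may be much smaller, and it does not justify the specific update rule $f_{i}^{k}=[A_{i}^{k}\vee A_{i}^{k-1}]\leftarrow(\bigvee_{j}w_{ji}\cdot A_{j}^{k})$ of (\ref{eq11}) that the paper actually uses in its modelling section (\ref{eq11d}); the paper's longer argument is what licenses that less restrictive, more signal-preserving choice. If you intend your proof as a replacement rather than an alternative witness, you would still owe an argument that the paper's $f_{i}^{k}$ converges, which your monotonicity lemma does not cover.
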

\begin{proof}

\begin{denote}
We denote the set of generators of a lattice element $w$ by $\{w\}$ and  the matrix of such sets by $\{w\}_{ij}$. The matrix elements are the sets of generators of, e.g., the weight matrix elements.
\end{denote}
Matrices $w_{ij}$ and $\{w\}_{ij}$ are one-to-one correspondent to each other in atomic lattices.
\begin{denote}
A \textbf{minus} sign will denote the difference operation
\begin{equation}\label{eqmin}
\{A\} - \{B\} = (\{A\} \cup \{B\}) \ominus (\{A\} \cap \{B\}),
\end{equation}
where $\ominus$ is the \textbf{set difference}\footnote{Given set $A$ and set $B$, the set difference of set $B$ from set $A$ is the set of all element in $A$, but not in $B$.}.
\end{denote}
For convenience, in what follows, we will omit the curly braces, thus, we use $A^{1} - A^{0}$ instead of $\{A^{1}\} - \{A^{0}\}$.

The map (\ref{eq2}) converges, if
\begin{equation}\label{eq4}
A_{i}^{k+1} - A_{i}^{k} \subset A_{i}^{k} - A_{i}^{k-1},
\end{equation}
since the lattice $L$ is bounded below by $\emptyset$. The process of calculating $A_{i}$ stops when $A_{i}^{k+1} = A_{i}^{k} = A_{i}^{k-1}$ for all $i$. Let us consider the following sequence of inequalities:
$$\begin{aligned}
A_{i}^{3} - A_{i}^{2} \subset A_{i}^{2} - A_{i}^{1} \subset A_{i}^{1} - A_{i}^{0};\\
A_{i}^{3} \cup A_{i}^{2} \ominus A_{i}^{3} \cap A_{i}^{2} \subset A_{i}^{2} \cup A_{i}^{1} \ominus A_{i}^{2} \cap A_{i}^{1} \subset  A_{i}^{1} \cup A_{i}^{0} \ominus A_{i}^{1} \cap A_{i}^{0};\\
\left\{\begin{aligned} A_{i}^{3} \cup A_{i}^{2} \subseteq  A_{i}^{2} \cup A_{i}^{1} \subseteq A_{i}^{1} \cup A_{i}^{0},\\
A_{i}^{3} \cap A_{i}^{2} \supseteq  A_{i}^{2} \cap A_{i}^{1} \supseteq A_{i}^{1} \cap A_{i}^{0},\\
and\; equalities\; cannot\; occur\; simultaneously.
\end{aligned}
\right.
\end{aligned}$$
Hence,
\begin{eqnarray}\label{eq5}
A_{i}^{2} \subseteq A_{i}^{1} \cup A_{i}^{0}\nonumber\\
A_{i}^{3} \subseteq A_{i}^{2} \cup A_{i}^{1},\nonumber\\
...\nonumber\\
A_{i}^{k}\subseteq A_{i}^{k-1}\cup A_{i}^{k-2};
\end{eqnarray}
\begin{equation}\label{eq6}
A_{i}^{k}\supseteq A_{i}^{k-1}\cap A_{i}^{k-2},
\end{equation}
and
equalities cannot occur simultaneously up to the end of the process
in order to satisfy (\ref{eq4}).
Substituting (\ref{eq2}) into \ref{eq5}, and passing to the lattice notation, we get:
\begin{gather}
c_{i}^{1}\cdot f_{i}^{1}\cdot\bigvee_{j=1}^{n}w_{ji}\cdot (c_{j}^{0}\cdot f_{j}^{0}\cdot\bigvee_{l=1}^{n}w_{lj}\cdot A_{l}^{0}) \leqslant (c_{i}^{0}\cdot f_{i}^{0}\cdot\bigvee_{j=1}^{n}w_{ji}\cdot A_{j}^{0}) \vee A_{i}^{0};
\end{gather}
\begin{multline}c_{i}^{2}\cdot f_{i}^{2}\cdot\bigvee_{j=1}^{n}w_{ji}\cdot A_{j}^{2} \leqslant c_{i}^{1}\cdot f_{i}^{1}\cdot\bigvee_{i=1}^{n}w_{ji}\cdot (c_{j}^{0}\cdot f_{j}^{0}\cdot\bigvee_{l=1}^{n}w_{lj}\cdot A_{l}^{0}) \vee \\ \vee c_{i}^{0}\cdot f_{i}^{0}\cdot\bigvee_{j=1}^{n}w_{ji}\cdot A_{j}^{0}.
\end{multline}
Since all $c_{i} \leqslant 1$ in the integral lattice, we may define $f_{i}^{k}$'s as right residuals\footnote{Since, $a\cdot b \leqslant c$ means for a maximal $a$: $a = c\leftarrow b$}:
\begin{multline}
f_{i}^{1} = [A_{i}^{1} \vee A_{i}^{0}]\leftarrow (\bigvee_{j=1}^{n}w_{ji}\cdot A_{j}^{1}) = \\ = [c_{i}^{0}\cdot f_{i}^{0}\cdot\bigvee_{j=1}^{n}w_{ji}\cdot A_{j}^{0} \vee A_{i}^{0}]\leftarrow (\bigvee_{j=1}^{n}w_{ji}\cdot c_{j}^{0}\cdot f_{j}^{0}\cdot\bigvee_{l=1}^{n}w_{lj}\cdot A_{l}^{0});
\end{multline}
\begin{multline}
f_{i}^{2} = [A_{i}^{2} \vee A_{i}^{1}]\leftarrow (\bigvee_{j=1}^{n}w_{ji}\cdot A_{j}^{2}) = \\ =[c_{i}^{1}\cdot f_{i}^{1}\cdot(\bigvee_{i=1}^{n}w_{ji}\cdot c_{j}^{0}\cdot f_{j}^{0}\cdot\bigvee_{l=1}^{n}w_{lj}\cdot A_{l}^{0}) \vee c_{i}^{0}\cdot f_{i}^{0}\cdot\bigvee_{j=1}^{n}w_{ji}\cdot A_{j}^{0}]\leftarrow \\ \leftarrow (\bigvee_{j=1}^{n}w_{ji}\cdot (c_{j}^{1}\cdot f_{j}^{1}\cdot\bigvee_{l=1}^{n}w_{lj}\cdot (c_{l}^{0}\cdot f_{l}^{0}\cdot\bigvee_{m=1}^{n}w_{ml}\cdot A_{m}^{0}))).
\end{multline}
...
\begin{multline}\label{eq11}
\shoveright{f_{i}^{k} = [A_{i}^{k} \vee A_{i}^{k-1}]\leftarrow (\bigvee_{j=1}^{n}w_{ji}\cdot A_{j}^{k})}
\end{multline}
By recursion, we can obtain all the $f_{i}^{k}$'s so as to satisfy the expressions (\ref{eq5}).

Also, $f_{i}^{k}$'s and $c_{i}^{k}$'s must satisfy (\ref{eq6}) in order to satisfy (\ref{eq4}). However, (\ref{eq6}) holds for such chosen $f_{i}^{k}$'s. Indeed, it must be $A_{i}^{k+1} = c_{i}^{k}\cdot f_{i}^{k}\cdot\bigvee_{i=1}^{n}w_{ji}A_{j}^{k} \geqslant A_{i}^{k}\wedge A_{i}^{k-1}$.

Let us denote
\begin{equation}
r_{i}^{k} = [A_{i}^{k}\wedge A_{i}^{k-1}]\leftarrow \bigvee_{i=1}^{n}w_{ji}\cdot A_{j}^{k}.
\end{equation}
Then, it should be $c_{i}^{k}\cdot f_{i}^{k} \geqslant r_{i}^{k}$ and $c_{i}^{k}\cdot f_{i}^{k} = r_{i}^{k}$ only if $A_{i}^{k+1} = A_{i}^{k}\wedge A_{i}^{k-1}$. In this case, $r_{i}^{k}\cdot\bigvee_{i=1}^{n}w_{ji}\cdot A_{j}^{k} = A_{i}^{k}\wedge A_{i}^{k-1}$. Since, \begin{equation}
f_{i}^{k}\cdot\bigvee_{i=1}^{n}w_{ji}\cdot A_{j}^{k} = \{[A_{i}^{k}\vee A_{i}^{k-1}]\leftarrow (\bigvee_{j=1}^{n}w_{ji}\cdot A_{j}^{k})\}\cdot\bigvee_{i=1}^{n}w_{ji}\cdot A_{j}^{k}\leqslant [A_{i}^{k}\vee A_{i}^{k-1}]
\end{equation}
and
\begin{equation}
r_{i}^{k}\cdot\bigvee_{i=1}^{n}w_{ji}\cdot A_{j}^{k} \leqslant [A_{i}^{k}\wedge A_{i}^{k-1}],
\end{equation}
where such $f_{i}^{k}$ and $r_{i}^{k}$ are maximal at satisfying the inequalities, we obtain $f_{i}^{k}\geqslant r_{i}^{k}$, since the lattice is integrally-closed. We may restrict from below $c_{i}^{k} \geqslant r_{i}^{k}\leftarrow f_{i}^{k}$ where it may be $c_{i}^{k} = r_{i}^{k}\leftarrow f_{i}^{k}$ only if $[r_{i}^{k}\leftarrow f_{i}^{k}]\cdot f_{i}^{k} = r_{i}^{k}$.

It must be $c_{i}^{k} > r_{i}^{k}\leftarrow f_{i}^{k}$ if $f_{i}^{k}\cdot\bigvee_{i=1}^{n}w_{ji}\cdot A_{j}^{k} = A_{i}^{k}\vee A_{i}^{k-1}$ in (\ref{eq5}) in order to avoid the simultaneous equality in (\ref{eq5}) and (\ref{eq6}). However, the simultaneity is not possible up to the end of the process, because, in this case, $r_{i}^{k}\leftarrow f_{i}^{k}\neq 1$, since \begin{multline}
f_{i}^{k}\cdot\bigvee_{i=1}^{n}w_{ji}\cdot A_{j}^{k} = A_{i}^{k}\vee A_{i}^{k-1} > A_{i}^{k}\wedge A_{i}^{k-1}\geqslant r_{i}^{k}\cdot\bigvee_{i=1}^{n}w_{ji}\cdot A_{j}^{k}.
\end{multline}
Hence, $f_{i}^{k} > r_{i}^{k}$ up to the end of the process. Only at the end of the process, $f_{i}^{k} = r_{i}^{k}$ and $c_{i}^{k} = 1$. Therefore, the process converges if $c_{i}^{k} \geqslant r_{i}^{k}\leftarrow f_{i}^{k}$.

Thus, we can always choose $c_{i}^{k} = 1$ in (\ref{eq2}), and MVCM's converge.
\end{proof}
However, the decision of (\ref{eq2}) is not unique: different sets of generators of initial node values may lead to different final values (unlike metric spaces under certain conditions \cite{kottas}) due to the fact that the lattice used is not linearly ordered. We may use a learning algorithm if such a situation is undesirable, and we know the required set of possible output values.

\section{Learning Weight Values}\label{learn}
We are based on ideas of \cite{papag} when constructing the required algorithm. However, we do not need two criterions to evaluate the final stage, due to the convergence of the multi-valued cognitive map. Also, the learning algorithm may be applied only to the final node values, not at every step (as in \cite{papag}); this is again due to the convergence. Finally, the algorithm is synchronous, unlike  \cite{papag}. Hence, we do not need additional expert suppositions about the firing sequence.

Let us consider the following expressions:
\begin{multline}
A_{i}^{k} = f_{i}^{k-1}\cdot\bigvee_{i=1}^{n}[(w_{ji}^{k-1} \oplus \triangle w_{ji}^{k-1})\cdot (A_{j}^{k-1} \oplus \triangle A_{j}^{k-1})] = \\
= f_{i}^{k-1}\cdot\bigvee_{i=1}^{n}(w_{ji}^{k-1}\cdot A_{j}^{k-1}) \oplus \triangle A_{i}^{k};
\end{multline}
\begin{equation}\label{eq17}
\triangle A_{i}^{k} = f_{i}^{k-1}\cdot\bigvee_{i=1}^{n}(\triangle w_{ji}^{k-1}\cdot A_{j}^{k-1}) \oplus F(w^{k-2}, A^{k-2}, \triangle w^{k-2}, \triangle A^{k-2}).
\end{equation}
Here, we have introduced a change in weights and concepts in order to pick up their values such that the output concepts $A_{i}^{k}$ would find themselves in demanding output regions $\{doc_{j}\}_{i}$\footnote{$doc_{ji}$ means $j$'s possible value of desired output concept set of $A_{i}^{k}$}. Such output sets of lattice $L$ elements should be established by experts for given initial values. The operation $\oplus$ means the join or difference (see below) depending on what you need: increase $A_{i}^{k}$ or decrease it. The term $F(w^{k-2}, A^{k-2}, \triangle w^{k-2}, \triangle A^{k-2})$ in (\ref{eq17}) should not be considered, since, such a term is absent at the first step, and all $A_{j}^{k-1}$'s and $w_{ji}^{k-1}$ are already calculated at the $k$'s step.

Let us consider the difference $gen_{ri}^{k} = doc_{ri} - A_{i}^{k}$ of a concept with one of its desired output values. The result is the number of generators in the symmetric difference (\ref{eqmin}) of generator sets of $r$'s desired output concept for $A_{i}^{k}$ and $A_{i}^{k}$ at the $k$'s step. The number should be added to, or deducted from, the concept generator number in order for the concept to become equal to the desired value. Hence,
\begin{equation}
\triangle A_{i}^{k} \leqslant gen_{ri}^{k}.
\end{equation}
Thus, we obtain the following expression using (\ref{eq17}):
\begin{equation}\label{eq19}
\triangle w_{ji}^{k-1} = (f_{i}^{k-1}\rightarrow gen_{ri})\leftarrow A_{j}^{k-1}.
\end{equation}
Then, we should deduct $\triangle w$ from $w$ if $A_{i}^{k} > doc_{ri}$: $w_{ji}^{k} = w_{ji}^{k-1} \ominus \triangle w_{ji}^{k-1}$ (let us note that the set difference operation $\ominus$ is used  here, not the symmetric difference (\ref{eqmin}), since we need to decrease exactly the number of generators). Otherwise, we take their join: $w_{ji}^{k} = w_{ji}^{k-1} \vee \triangle w_{ji}^{k-1}$.

In the case of incomparable $A_{i}^{k}$ and $doc_{ri}$, the concept first increases under the algorithm work, up to it becoming greater than the desired value, and, after that, it decreases as is described above.

If we have negative weight values, the sequence is inverse: we join $\triangle w$ and $w$ if $A_{i}^{k} > doc_{ri}$ and take the difference otherwise.

Such a comparison may be made with all elements of the desired output set, in order to choose the most suitable learned weight matrix. This algorithm describes the weight correction at every step. However, it is not necessary: we may check the condition of hitting the required region at the end of the process of firing the map, since the process converges. Thus, we may calculate (\ref{eq19}) only once at an iteration cycle if the condition is not satisfied. Such a calculation can be more suitable (e.g., it changes the initial weights less) if some concepts get the desired values only at the end of the process. In this case, they should not be corrected with the others. Naturally, changing concepts at every step can give another result in this case (see Sec. \ref{disc})

\section{Modelling Hybrid Energy Systems}
\label{model}
We use the problem formulation of a modelling example from \cite{groump}, without, however, a feedback to natural concepts. Indeed, it is hard to understand how the energy system functioning can influence sun insolation or wind. Thus, we consider the example of a Hybrid Energy System combining wind and photovoltaic subsystems; its cognitive map model is depicted in Fig. \ref{fig1}.

The model includes the following five concepts:
\begin{itemize}
\item C1: sun insolation;
\item C2: environment temperature;
\item C3: wind;
\item C4: PV-subsystem;
\item C5: Wind-Turbine-subsystem.
\end{itemize}
In this model there are two energy source decision concepts (outputs), i.e., the two energy sources are considered:
the C4: PV-subsystem and the C5: Wind-Turbine-subsystem. Concepts C1--C3 of nature and technical
factors influence the subsystems and determine how each energy source will function in this model. The concept's initial values can be obtained from experts' assessments of measurements, which take values in a lattice of linguistic variables. The experts' assessments of concept influence take values also in this lattice. Detailed information for hybrid renewable energy
systems is given in \cite{gould}, \cite{damg}. The case study from the literature
was examined in \cite{groump}, and we consider it here using a multi-valued scale for weights and values in the cognitive map (\ref{eq2}) where all $c_{i} = 1$ due to Theorem \ref{th}.

We use the bi-lattice $L$ that is built from two lattices the $L_{1}$ and the $L_{2}$  (Fig. \ref{fig3}), \ref{fig4}) by direct multiplication $\times$, as the scale of experts' assessments.
\begin{figure}
  \includegraphics[scale=0.9]{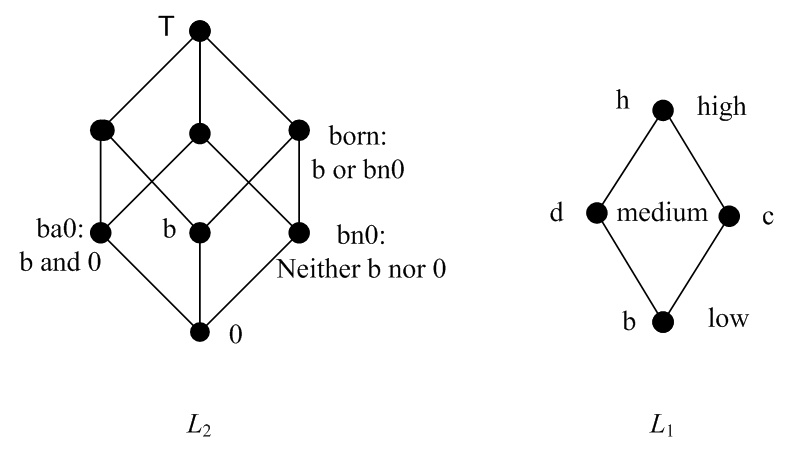}
\caption{Lattices $L_{1}$ and $L_{2}$}
\label{fig3}       
\end{figure}

\begin{figure}
  \includegraphics[scale=0.8]{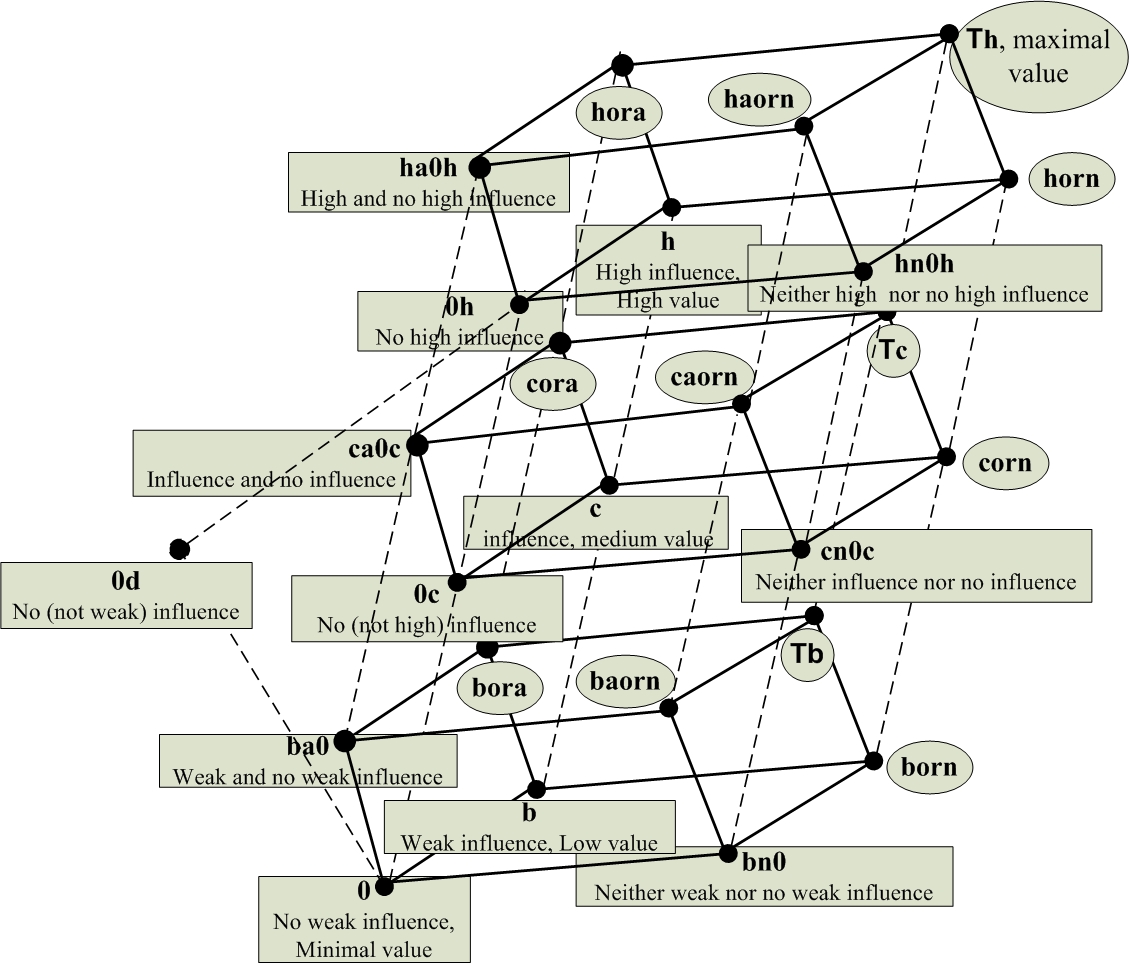}
\caption{The b-c-h path in the uncertainty degree lattice $L = L_{2}\times L_{1}$ which is used in the modelling example.
 The d-branch values are the same as in the c-branch, by replacing c to d in the names}
\label{fig4}       
\end{figure}
However, we consider the bi-lattice as the lattice where the unique partial order is generated by atoms $ba0, b, bn0, 0c$, and $0d$ Fig. \ref{fig4}. We use two linearly-ordered branches in the lattice $L_{1}$ in order to  regularly obtain the distributive and atomic lattice $L$. Then, we may use the meet $\wedge$ as the monoid multiplication. In this case, both residuals are equal and coincide with the lattice implication $\Rightarrow$. The nodes $c$ and $d$ in the $L_{1}$, corresponding to the medium value, may be interpreted as ``not high'' and ``not weak'' uncertainty degrees (this difference is not reflected in the interpretations in Fig. \ref{fig4}). There is not a universal method to build a monoid for a residuated lattice, and one should use some heuristics to determine the multiplication. Thus, we do not consider a general residual construction here and propose to investigate variants to do it in future.

It is possible to use the $L_{1}$ built from only one linearly-ordered branch. Then, the $L$ would be built by a quasi-direct multiplication to be an atomic lattice: the node $0h$ must be connected with $0$ directly. However, we consider the lattice $L = L_{2}\times L_{1}$ for more variety.

A variant of the lattice $L_{2}$ was proposed in \cite{Maximov_Ax} as the interpretation of N. A. Vasil'ev's logic ideas. Vasil'ev has suggested three types of statement: positive, negative and indifferent, instead of only positive and negative. He considered also intermediate types as a hesitation between these main ones. Similarly, we consider here three main uncertainty degrees: $ba0, b, bn0$ (Fig. \ref{fig3}) --- some assessment ``$b$'', the assessment ``$bn0$: Neither $b$ nor 0'', and the estimation ``$ba0$: $b$ and 0'' --- and the same at the levels $c - d$ and $h$ Fig. \ref{fig4}. The estimation, e.g., ``$born$: $b$ or $bn0$'' is the join of $b$ and $bn0$ and can be considered as the hesitation between $b$ and ``Neither $b$ nor 0''. Similarly, ``$bora$: $b$ or $ba0$'' is the join of $b$ and $ba0$ and can be considered as the hesitation between $b$ and ``$b$ and 0'' and so on. Thus, we obtain many different variants of uncertainty degrees in assessments, available to experts.

The connections between the concepts of the cognitive map are represented in Table \ref{tab1}, and the initial concept values are determined in Table \ref{tab2}. We tried to more or less match the data from \cite{groump}.
\begin{table}
\caption{Weights in the cognitive map for Hybrid Energy Systems}
\label{tab1}
\begin{tabular}
{c|c|c|c|c|c}
 & $C_{1}$ & $C_{2}$ & $C_{3}$ & $C_{4}$ & $C_{5}$ \\ \hline
$C_{1}$ & Th & born & 0 & hora & b  \\ \hline
$C_{2}$ & 0 & 0 &  0 & Tb & 0  \\ \hline   
$C_{3}$ & 0 & ca0c & Th & bora & hn0h  \\ \hline   
$C_{4}$ & 0 & 0  & 0  & 0 &  0  \\ \hline
$C_{5}$ & 0 & 0  & 0  & 0 &  0
\end{tabular}
\end{table}
\begin{table}
\caption{Initial concept values}
\label{tab2}
\begin{tabular}
{c|c|c|c}
  & Case 1 & Case 2 & Case 3\\ \hline
$C_{1}$ & horn  & horn & horn\\ \hline
$C_{2}$ & c  &  c &  b \\ \hline
$C_{3}$ & c  &  d &  d \\ \hline
$C_{4}$ & c  &  h &  h \\ \hline
$C_{5}$ & 0c  &  caorn &  c
\end{tabular}
\end{table}
In Case 1, all concept values are concentrated in one $L$ branch (b-c-h). In Cases 2 and 3, one initial concept value belongs to the b-d-h branch of the $L$.

Thus, we calculate the map concepts' values by (\ref{eq2}) in the following form:
\begin{equation}\label{eq2d}
C_{i}^{k} = f_{i}^{k-1}\wedge\bigvee_{j=1}^{5}(w_{ji}\wedge C_{j}^{k-1})
\end{equation}
where coefficients are calculated by (\ref{eq11}) in the following form:
\begin{equation}\label{eq11d}
{f_{i}^{k} = [\bigvee_{j=1}^{5}(w_{ji}\wedge C_{j}^{k})]\Rightarrow [C_{i}^{k} \vee C_{i}^{k-1}]}.
\end{equation}
The nodes are triggered
simultaneously, and their values interact with
ones to be updated through this process of interaction in the same iteration step. We take values $f_{i}^{0} = Th$. Hence, natural concepts the $C_{1}$ and the $C_{3}$ do not change unlike (\ref{eq1}) (taken from \cite{groump}) where they are changed due to the sigmoid transfer function $f$. However, such a choice of the $f_{i}^{0}$ is not necessary: these values may be chosen arbitrarily.

Here, the lattice of weights and values satisfies the conditions of Theorem \ref{th} where the lattice top element $Th$ coincides with the monoid neutral element, and the monoid multiplication coincides with the lattice meet.
The calculation results are represented in Tables \ref{tab3} -- \ref{tab4}.
\begin{table}
\caption{Iterations of concept values, Case 1}
\label{tab3}
  \includegraphics[scale=2]{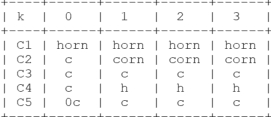}
\end{table}

\begin{table}
\caption{Iterations of concept values, Case 2 and Case 3}
\label{tab4}
\includegraphics[scale=2]{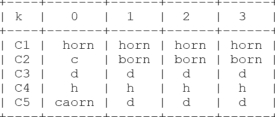} \\
\includegraphics[scale=2]{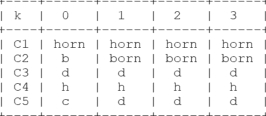}
\end{table}
We see that the final state depends on initial values and on distribution of them over $L_{1}$-branches. The result is also dependant on weight values, as even they are of the same degree of uncertainty. However, the final states are more or less equal in degrees of uncertainty in these cases (see also Sec. \ref{disc}). Moreover, these final states are stable: if we take them as initial ones, we obtain them at the end. Thus, the model corresponds well to the idea of the stationary state of the system under constant insolation and wind. Let us note one more time, that these natural factors are not changed by a transfer function in model iterations, unlike \cite{groump}. However, we can change them outwardly in order to compare results with the similar ones of \cite{groump}.

\section{Discussion}
\label{disc}
\subsection{Natural concepts' changing}
We consider only Case 3 here. Let insolation first be a constant --- $C_{1} = const$--- and wind $C_{3}$ increases: Table \ref{tab5}.
\begin{table}
\caption{Iterations of concept values, Case 3, $C_{1} = const$ and the $C_{3}$ increases}
\label{tab5}
\includegraphics[scale=1.1]{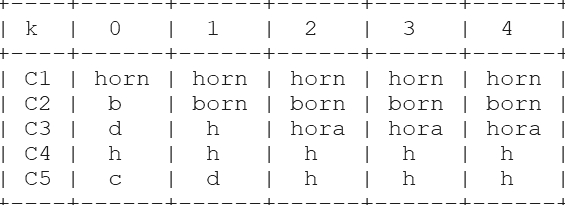}
\end{table}
We see the environment temperature $C_{2}$ has changed the uncertainty value but remains at the same $L_{1}$ level. The output of the photovoltaic subsystem $C_{4}$ as expected, has not changed, and the wind-turbine $C_{5}$ output has increased.

Let us decrease the insolation value else: Table \ref{tab6}.
\begin{table}
\caption{Iterations of concept values, Case 3, the $C_{1}$ decreases and the $C_{3}$ increases}
\label{tab6}
\includegraphics[scale=1.1]{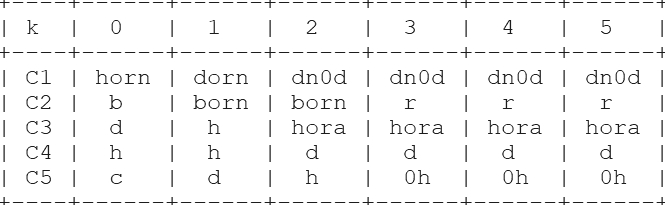}
\end{table}
We see the environment temperature $C_{2}$ has changed the uncertainty value but less than in the previous case. The output of the photovoltaic subsystem $C_{4}$ has slightly decreased, and the wind-turbine $C_{5}$ output has increased its $L_{1}$ level but less than in the previous case. This result is intuitionally clear; it corresponds with our weight definitions, and it is perhaps better than in \cite{groump} where the $C_{5}$ and the $C_{2}$ may be greater than here (with all the ambiguity  in  establishing the correspondence between numbers and lattice elements).

\subsection{Weights with negative values}
However, some negative number values are used in the weight matrix in \cite{groump}. The members in the sum (which define the map) with such weights deducted from the sum. We can also use a similar deduction with the help of (\ref{eqmin}). We mark such weight matrix members with the minus sign. Sets of generators of the lattice elements, including such weights in the join, will be deducted from the join by (\ref{eqmin}). Thus, we consider the following weight matrix corresponding to the similar one of \cite{groump} and Table \ref{tab1}: Table \ref{tab7}.
\begin{table}
\caption{Weights with negative elements in the cognitive map for Hybrid Energy Systems}
\label{tab7}
\begin{tabular}
{c|c|c|c|c|c}
 & $C_{1}$ & $C_{2}$ & $C_{3}$ & $C_{4}$ & $C_{5}$ \\ \hline
$C_{1}$ & Th & born & 0 & hora & b  \\ \hline
$C_{2}$ & 0 & 0 &  0 & - Tb & 0  \\ \hline   
$C_{3}$ & 0 & - ca0c & Th & bora & hn0h  \\ \hline   
$C_{4}$ & 0 & 0  & 0  & 0 &  0  \\ \hline
$C_{5}$ & 0 & 0  & 0  & 0 &  0
\end{tabular}
\end{table}

Let us consider again first that insolation is constant --- $C_{1} = const$--- and the wind $C_{3} = const$ is constant too: Table \ref{tab8}.
\begin{table}
\caption{Iterations of concept values with weights negative elements, Case 3, $C_{1} = const$  and $C_{3} = const$ }
\label{tab8}
\includegraphics[scale=1.1]{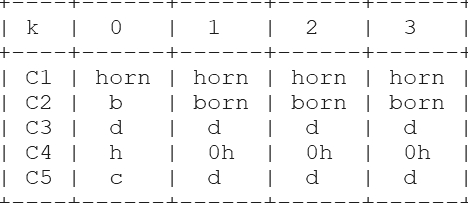}
\end{table}
This is almost Table \ref{tab4}. Only, the $C_{4}$ has slightly decreased its uncertainty value.

Let us increase the wind value: Table \ref{tab9}.
\begin{table}
\caption{Iterations of concept values with the weights' negative elements, Case 3, $C_{1} = const$ and the concept $C_{3}$ increases}
\label{tab9}
\includegraphics[scale=1.1]{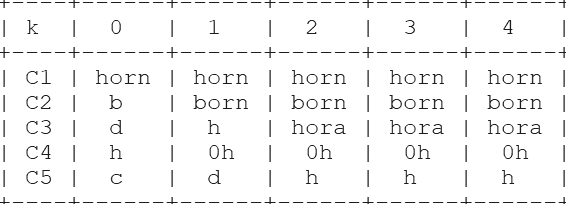}
\end{table}
Again, this is almost Table \ref{tab5}, and the $C_{4}$ has the same decrease as in the previous case.

Let us decrease else the insolation value: Table \ref{tab10}.
\begin{table}
\caption{Iterations of concept values with weights' negative elements, Case 3, the $C_{1}$ decreases and the $C_{3}$ increases}
\label{tab10}
\includegraphics[scale=1.1]{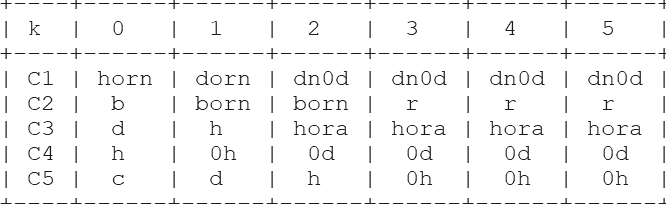}
\end{table}
This is almost Table \ref{tab6}. Only, the $C_{4}$ has slightly decreased its uncertainty value.
Thus, we see the results do not change qualitatively.

\subsection{The dependence on initial values }
However, what happens if we change, e.g., the wind initial value? Let it be in Case 3 (Table \ref{tab2}) $C_{5} = d$ instead of $C_{5} = c$. In the uncertainty sense, such a replacement changes almost nothing intuitively. The results are in Table \ref{tab11}.
\begin{table}
\caption{Iterations of concept values  with weights' negative elements, Case 3, the $C_{1}$ decreases and the $C_{3}$ increases, initial $C_{5} = d$}
\label{tab11}
\includegraphics[scale=1.1]{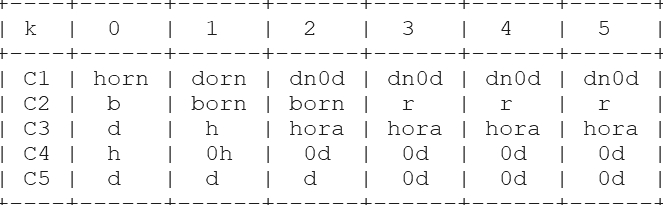}
\end{table}
We see that wind-turbine output even decreases, though, the wind value increases. All other values are the same. Removing minus from the weight matrix elements only replaces the $C_{4}$ uncertainty value to the same in Table \ref{tab6}. Thus, initial values can influence the modelled system behaviour crucially in general.

The thing is that different sets of generators in initial values may really lead to different final values, since the lattice used is not linearly ordered, and all $C_{i}$'s lie inside the two initial ones: $\forall k: \;C_{i}^{k}\leqslant C_{i}^{0}\vee C_{i}^{1}$. In our case, $\forall k$ $C_{5}^{k}\leqslant c\vee d = h$ in Table \ref{tab10}, and $C_{5}^{k}\leqslant d$ in Table \ref{tab11}; $C_{5}^{1} = d$ in both the cases, it does not depend on $C_{5}^{0}$.
Thus, different initial values may lead to different system stable states. However, such a dependence may be excluded by the learning process of weight elements, if we know the demanding output range (see Sec. \ref{learn}). We consider these calculations in the following subsection.

Also, such effects of intuitive contradiction can be indirectly related to our lattice determination: we went from the bi-lattice $L_{2}\times L_{1}$ to the lattice $L$ where the elements $0c$ and $0d$ are the same generators as $b, ba0$, and $bn0$. Though, in the lattice $L_{1}$, such values are more significant than the bottom level (the level of $b, ba0$, and $bn0$ in the $L$). Hence, our real partial order is generated by the number of generators of lattice elements, and it may be different from the intuitive interpretation of $L_{2}\times L_{1}$ partial orders.

\subsection{Learning}
We use here the algorithm of Sec. \ref{learn} which trains weights so that the output concepts would be inside the desired lattice $L$ subsets. We use formula (\ref{eq19}) in the following form in our case:
\begin{equation}\label{eq19n}
\triangle w_{ji}^{k-1} = A_{j}^{k-1}\Rightarrow (f_{i}^{k-1}\Rightarrow gen_{ri}),
\end{equation}
since, both residuals become the lattice implication when the monoid multiplication is the meet.

We determine desired output concept sets for the $C_{4}$ and the $C_{5}$ as:
\begin{gather}
Doc_{4} = \{0d, d, da0d, dn0d\} \\
Doc_{5} = \{0h, h, ha0h, hn0h\}.
\end{gather}
First, we apply the learning algorithm at the end of firing the map (Case 3, initial  $C_{5} = d$). Then we obtain the weight matrix in Table \ref{tab12} instead of Table \ref{tab7}.
\begin{table}
\caption{Weight matrix obtained from the learning process of the $C_{5}$ at the end of iterations, Case 3, the $C_{1}$ decreases and the $C_{3}$ increases, initial $C_{5} = d$}
\label{tab12}
\includegraphics[scale=1.2]{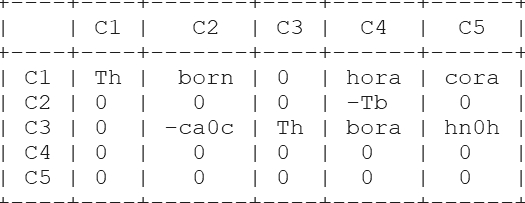}
\end{table}

In this case, we obtain the following iterations of concept values when the weight matrix is in Table \ref{tab12} for initial $C_{5} = c$ and $C_{5} = d$, and the comparison is made with the first elements of $Doc_{4}, Doc_{5}$: Table \ref{tab13}. We see that final output concept values do not depend on the initial ones now.
\begin{table}
\caption{Iterations of concept values  with weights from Table \ref{tab12}, Case 3, the $C_{1}$ decreases and the $C_{3}$ increases, initial $C_{5} = c$ and $C_{5} = d$.}
\label{tab13}
\includegraphics[scale=1.1]{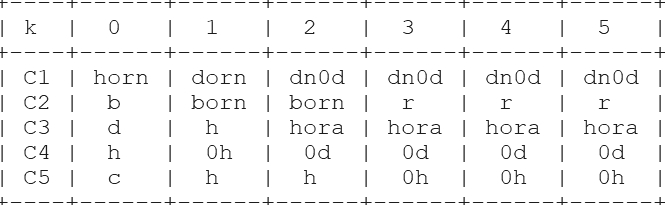}
\includegraphics[scale=1.1]{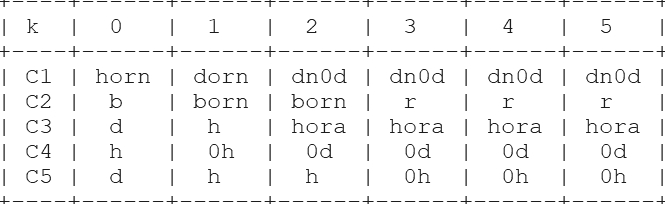}
\end{table}

If we apply learning at each iteration step, we will obtain different weights: Table \ref{tab14}. We see that matrix elements for $C_{4}$ calculation are also changed in this case. We have seen, though, that it is not needed in reality: the $C_{4}$ process converges in the desired region without learning.
In Table \ref{tab15}, the output concept results are also obtained with the learning process. We see, that the process leads to different final values of the $C_{5}$ for initial ones $c$ and $d$, though, both of them are inside the desired output set.

If we use the learned matrix of Table \ref{tab14} in the map firing, we will obtain the output concept result for the initial $C_{5} = c$, which matches the same one for the initial $C_{5} = d$ and differs from the similar one in Table \ref{tab15}: Table \ref{tab16}.
\begin{table}
\caption{Weight matrix obtained from the learning process of the $C_{5}$ at each step of the iterations, Case 3, the $C_{1}$ decreases and the $C_{3}$ increases, initial $C_{5} = d$}
\label{tab14}
\includegraphics[scale=1.1]{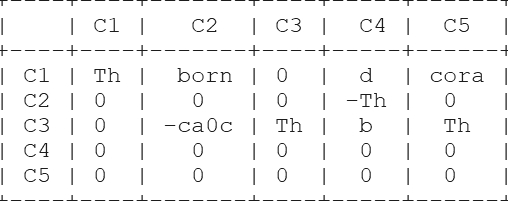}
\end{table}
\begin{table}
\caption{Iterations of concept values  with weights from Table \ref{tab14}, Case 3, the $C_{1}$ decreases and the $C_{3}$ increases, initial $C_{5} = c$ and $C_{5} = d$. }
\label{tab15}
\includegraphics[scale=1.1]{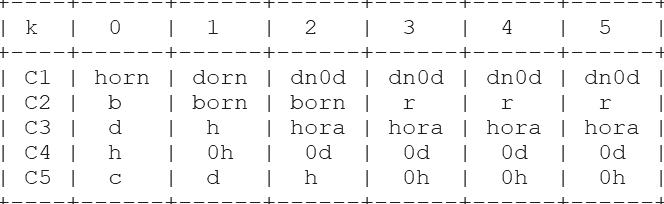}
\includegraphics[scale=1.1]{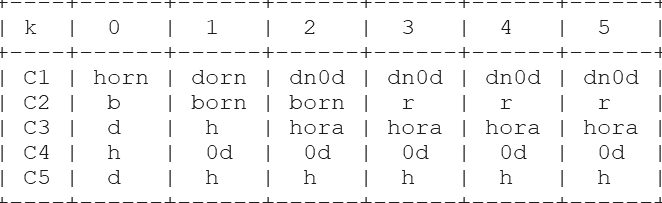}
\end{table}
\begin{table}
\caption{Iterations of concept values  without learning with weights from Table \ref{tab14}, Case 3, the $C_{1}$ decreases and the $C_{3}$ increases, initial $C_{5} = c$ and $C_{5} = d$.}
\label{tab16}
\includegraphics[scale=1.1]{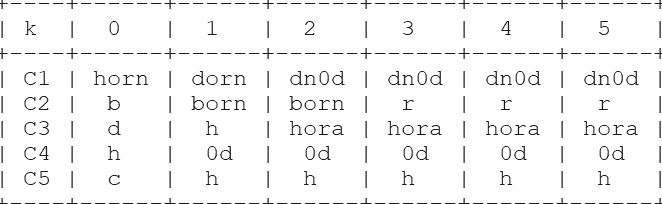}
\includegraphics[scale=1.1]{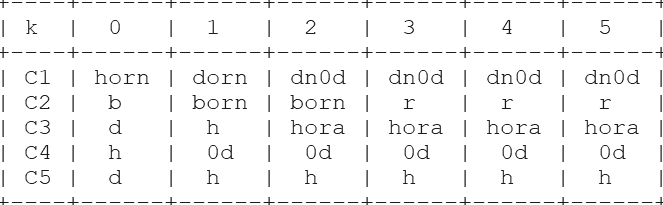}
\end{table}

We see that the output concepts converge in the desired range in all these variants, and the output values do not depend on whether the initial one is $c$ or $d$ (if we calculate them with the learned weight matrix), unlike the previous Subsection.

\subsection{Runtime}
Finally, all the calculations of implications in (\ref{eq11d}) were performed by the quick algorithm of \cite{maximov20e} and the resulting timing is depicted in Fig. \ref{fig5}.
\begin{figure}
  \includegraphics[scale=0.7]{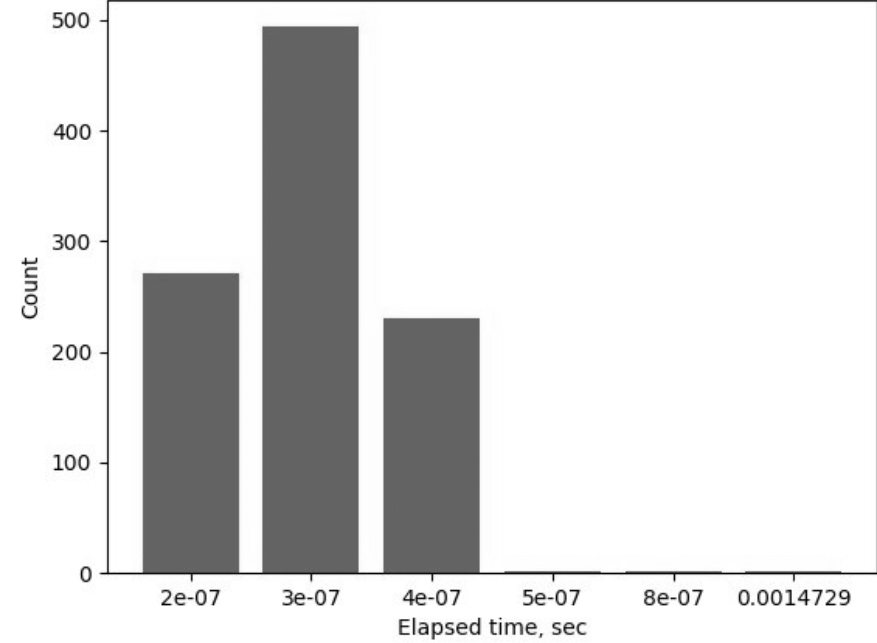}
\caption{Calculation elapsed time}
\label{fig5}       
\end{figure}

\section{Conclusion}
\label{concl}
We have considered the concept of cognitive maps in which all weights and data take values in a partially-ordered set (exactly, in a lattice) of linguistic quantities. Thus, experts get a wider scale for their linguistic assessments than in a fuzzy case. Such maps converge under some limitations on the set of map variable values. We give also the algorithm to learn the map weight matrix in order to select values so that they are in the desired range of the lattice.

We give a detailed consideration of a modelling example versus using a fuzzy cognitive map. We obtain even more realistic results, since, in our approach, immutable or externally modifiable concepts do not change by the map recount. In ordinary fuzzy cognitive maps, such concepts are automatically changed by a transfer (trimming) function.

Thus, it seems the consideration of  multi-valued cognitive maps in the paper demonstrates the self-consistency and correspondence to the reality of the approach, despite some ambiguity in interpretation of linguistic assessments. Moreover, the approach also provides more opportunities than in fuzzy maps for expert evaluations.

We have left for future investigations the problem of the comparison of different expert opinions. You need to introduce some conception of multi-valued numbers as the lattice subsets in order to do this. Also, we have left for the future the investigation of variants to define a universal residuated construction for the lattice used as a scale of linguistic values of map variables.


\section*{References}

  \bibliographystyle{elsarticle-num}
  \bibliography{Maximov_Cognitive}





\end{document}